\newtheorem{lemma}{Lemma}
\begin{document}

\title{A Dynamical Systems Approach to Mitigating Oversmoothing in Graph Neural Networks}
\author{Biswadeep Chakraborty, Harshit Kumar, Saibal Mukhopadhyay\\
Department of Electrical and Computer Engineering\\
Georgia Institute of Technology
\date{}
}
\maketitle
\begin{abstract}
\begin{quote}
Oversmoothing in Graph Neural Networks (GNNs) poses a significant challenge as network depth increases, leading to homogenized node representations and a loss of expressiveness. In this work, we approach the oversmoothing problem from a dynamical systems perspective, providing a deeper understanding of the stability and convergence behavior of GNNs. Leveraging insights from dynamical systems theory, we identify the root causes of oversmoothing and propose \textbf{\textit{DYNAMO-GAT}}. This approach utilizes noise-driven covariance analysis and Anti-Hebbian principles to selectively prune redundant attention weights, dynamically adjusting the network's behavior to maintain node feature diversity and stability. Our theoretical analysis reveals how DYNAMO-GAT disrupts the convergence to oversmoothed states, while experimental results on benchmark datasets demonstrate its superior performance and efficiency compared to traditional and state-of-the-art methods. DYNAMO-GAT not only advances the theoretical understanding of oversmoothing through the lens of dynamical systems but also provides a practical and effective solution for improving the stability and expressiveness of deep GNNs.

\end{quote}
\end{abstract}

\begin{figure*}
    \centering
    \includegraphics[width=0.9\linewidth]{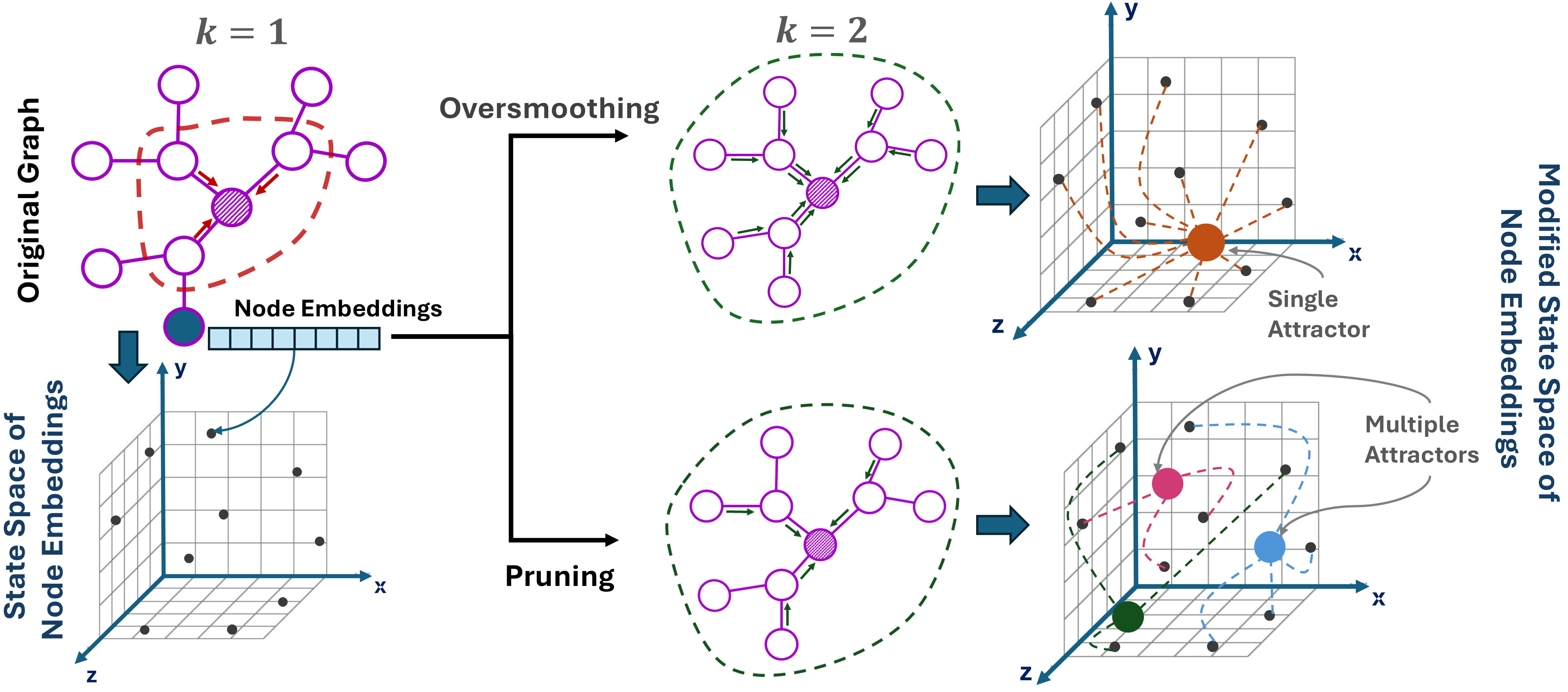}
    \caption{As the number of layers $k$ in a GNN increases, oversmoothing causes node embeddings to converge towards a single attractor state, resulting in the loss of node feature diversity. Pruning mitigates this effect by maintaining multiple attractor states, thereby preserving the distinctiveness of node embeddings and preventing the detrimental effects of oversmoothing.}
    \label{fig:overall}
\end{figure*}

\section{Introduction}

Graph Neural Networks (GNNs) \cite{wu2020comprehensive} have emerged as an important component in contemporary machine learning, excelling in tasks that require the analysis of graph-structured data. Their capacity to model complex relationships between nodes and edges has driven their widespread application in fields ranging from molecular property prediction \cite{gilmer2017neural, reiser2022graph, gasteiger2021gemnet} to social network analysis \cite{Kipf2017SemiSupervisedCW, fan2019graph} and recommendation systems \cite{ying2018graph}. However, one significant challenge that GNNs face is the phenomenon known as \textit{oversmoothing}. As the depth of the GNN increases, node representations tend to homogenize, leading to a decline in the network's ability to differentiate between nodes, ultimately impairing performance \cite{li2018deeper}.

Oversmoothing in GNNs has been extensively studied, with early works such as \citet{li2018deeper} identifying it as a critical issue in deep architectures like Graph Convolutional Networks (GCNs). Subsequent theoretical analyses \cite{Oono2019GraphNN, Cai2020ANO, Keriven2022NotTL, chen2020measuring, xu2019graph} have confirmed that oversmoothing is a fundamental problem in message-passing architectures, where repeated aggregation leads to the homogenization of node features. To counteract this, various strategies have been proposed, such as residual connections and skip connections \cite{li2019deepgcns, xu2018representation}, normalization methods \cite{ba2016layer, ioffe2015batch, zhou2020towards}, and attention mechanisms \cite{Velickovic2018graph}. However, these approaches primarily involve architectural modifications that do not fundamentally address the propagation dynamics responsible for oversmoothing. Consequently, they often provide only partial solutions, with oversmoothing persisting in deeper networks, ultimately limiting the expressiveness and effectiveness of deep GNNs in complex tasks.

More recent efforts, including pruning techniques aimed at enhancing efficiency and reducing model complexity, have also been explored as a means to address oversmoothing \cite{li2019deepgcns, zhao2020sparsity}. Pruning typically involves the removal of redundant edges or neurons, but these methods generally overlook the dynamical aspects of the network that contribute to oversmoothing. Similarly, while attention mechanisms, like GATs, have improved the focus on relevant parts of the graph, they remain susceptible to oversmoothing without proper regulation \cite{wu2023demystifying}.

In contrast to these existing methods, our work proposes a novel perspective by framing oversmoothing as a dynamical systems problem. Existing techniques often focus on structural adjustments without considering the underlying convergence behavior of GNNs, which can result in suboptimal solutions that fail to prevent oversmoothing in deeper networks \cite{li2018deeper, chen2020measuring, Oono2019GraphNN}. This limited focus can cause inefficiencies, where networks either become too complex or require excessive computational resources, yet still suffer from a loss of node feature diversity. Additionally, such methods may not generalize well across different graph structures, leading to inconsistent performance and limiting the applicability of GNNs in more challenging or real-world scenarios.

By viewing the iterative message-passing process in GNNs as analogous to a system converging towards a low-dimensional attractor \cite{li2019deepgcns}, we can pinpoint the exact mechanisms that lead to oversmoothing, where node representations collapse into indistinguishable states. This dynamical systems perspective allows us to move beyond superficial architectural fixes by directly addressing the root cause of oversmoothing: the convergence behavior of the network. By analyzing the system's stability and the conditions under which nodes lose their distinctiveness, we can design more effective strategies that prevent this collapse, ensuring that GNNs maintain expressiveness and perform well, even as the network depth increases.

The dynamical systems framework offers a new lens through which to study the convergence behavior of GNNs. Unlike previous approaches that treat oversmoothing as an empirical challenge to be mitigated through trial and error, our approach identifies the precise mechanisms that lead to oversmoothing. By analyzing the eigenvalues of the graph attention mechanism and studying the system's fixed points \cite{abbe2020entrywise, allen2019convergence}, we can characterize the conditions under which node representations lose their distinctiveness. This theoretical foundation explains existing empirical observations and opens new avenues for developing robust and adaptive GNN architectures.

Building on this foundation, we introduce DYNAMO-GAT, a novel GNN architecture specifically designed to counteract oversmoothing. Unlike existing methods that patch the oversmoothing issue through network adjustments, our approach delves into the root cause by analyzing the GNN's behavior as a dynamical system. This allows DYNAMO-GAT to adaptively adjust the network's behavior during training, ensuring that node representations remain diverse and informative across all layers. Our approach not only mitigates oversmoothing but also enhances the overall expressiveness and performance of deep GNNs. The main contributions of this paper are as follows:
\begin{itemize}
    \item We propose a novel theoretical framework for analyzing oversmoothing in GNNs using concepts from dynamical systems theory. This framework allows us to precisely identify the conditions under which oversmoothing occurs and provides insights into how it can be mitigated.
    \item We present DYNAMO-GAT, a GNN architecture that adaptively prevents oversmoothing during training by leveraging dynamical systems principles, offering a departure from static architectural modifications.
    \item We provide theoretical proofs that demonstrate how DYNAMO-GAT alters the system's fixed points, preserving node representation diversity even in deep networks.
\end{itemize}

Our work bridges the gap between empirical solutions to oversmoothing and a more fundamental understanding of the problem. By approaching oversmoothing from a dynamical systems perspective, we pave the way for more stable and expressive deep GNNs, addressing a critical gap in the current literature.

\section{Dynamical Systems View of Oversmoothing}

Oversmoothing in GNNs is a critical challenge, particularly as the depth of these networks increases. While Graph Attention Networks (GATs) introduce dynamic weighting mechanisms that can mitigate oversmoothing to some extent, they can also contribute to it under certain conditions \cite{Velickovic2018graph, Rusch2023SurveyOS}. To fully understand and address this phenomenon, we adopt a dynamical systems perspective \cite{Roth2024simplifying}.

Unlike traditional approaches that focus on architectural modifications, the dynamical systems view provides a more fundamental explanation by examining the stability and convergence properties of GNNs. By modeling GATs as dynamical systems, we can analyze how node representations evolve across layers and identify the conditions under which oversmoothing occurs \cite{Rusch2023demystifying, Giovanni2023oversquashing}. This perspective not only deepens our theoretical understanding but also suggests new strategies for mitigating oversmoothing \cite{Roth2024simplifying, Rusch2022graphCON}.

\textbf{GATs as Dynamical Systems.} In GATs, node representations evolve according to the learned attention weights $\alpha_{ij}$, which govern the influence of neighboring nodes. This dynamic weighting introduces complexity into the system's behavior, making it essential to understand how these weights evolve across layers. The attention mechanism can be seen as a time-varying update function:

\[
f(\mathbf{x}(t)) = \sigma\left(\sum_{j \in \mathcal{N}(i)} \alpha_{ij}(t) \mathbf{W} \mathbf{x}_j(t)\right),
\]

where the weights $\alpha_{ij}(t)$ are learned at each layer $t$. As these weights evolve, the system's dynamics may lead to a state where node representations become indistinguishable, resulting in oversmoothing. Understanding this process is key to designing GATs that avoid oversmoothing while still leveraging attention mechanisms effectively.

\subsection{Theoretical Analysis of Oversmoothing in GATs}

In this subsection, we rigorously analyze the phenomenon of oversmoothing in GATs using dynamical systems theory. Specifically, we explore the existence of fixed points, their stability, and the conditions under which node representations converge to indistinguishable states. The detailed theoretical proofs are given in the Supplementary Section 

\begin{lemma}
\label{lemma:fixed_points}
\textbf{(Existence of Fixed Points in GATs).} 
Consider a GAT with an update rule \( f: \mathbb{R}^{N \times d} \rightarrow \mathbb{R}^{N \times d} \), where \( \mathbf{X}(t) \in \mathbb{R}^{N \times d} \) represents the node features at layer \( t \). The update rule for the node features can be expressed as:

\[
\mathbf{X}_i(t+1) = f(\mathbf{X}_i(t)) = \sigma\left(\sum_{j \in \mathcal{N}(i)} \alpha_{ij}(t) \mathbf{W} \mathbf{X}_j(t)\right),
\]

where \( \sigma \) is a nonlinear activation function, \( \mathbf{W} \) is a weight matrix, and \( \alpha_{ij}(t) \) are the attention coefficients at layer \( t \) for node \( i \) and its neighbor \( j \).

If the spectral radius \( \rho(\mathbf{A}_{\text{eff}}(t)) \leq 1 \), the system will converge to a fixed point \( \mathbf{X}^* \in \mathbb{R}^{N \times d} \) such that \( \mathbf{X}^* = f(\mathbf{X}^*) \), indicating that node features become stable and indistinguishable, leading to oversmoothing.
\end{lemma}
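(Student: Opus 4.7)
The plan is to view the layer-to-layer map $f$ as a composition of a linear propagation operator and a componentwise nonlinearity, and then apply a fixed-point theorem to that composition under the spectral hypothesis. First I would rewrite the update in matrix form by stacking the per-node equations: letting $\mathbf{A}(t) \in \mathbb{R}^{N \times N}$ be the matrix with entries $[\mathbf{A}(t)]_{ij} = \alpha_{ij}(t)$ for $j \in \mathcal{N}(i)$ and zero otherwise, the update becomes $\mathbf{X}(t+1) = \sigma\bigl(\mathbf{A}(t)\mathbf{X}(t)\mathbf{W}^\top\bigr)$. Vectorizing gives $\mathrm{vec}(\mathbf{X}(t+1)) = \sigma\bigl(\mathbf{A}_{\text{eff}}(t)\,\mathrm{vec}(\mathbf{X}(t))\bigr)$ with $\mathbf{A}_{\text{eff}}(t) = \mathbf{W} \otimes \mathbf{A}(t)$, which is the object whose spectral radius the lemma constrains. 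This identification ties the graph-level propagation to a standard discrete-time nonlinear dynamical system.

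Next I would exploit two structural facts. The activation $\sigma$ (for standard choices such as $\mathrm{ReLU}$, $\tanh$, or sigmoid) is $1$-Lipschitz, so $\|\sigma(\mathbf{u}) - \sigma(\mathbf{v})\| \le \|\mathbf{u}-\mathbf{v}\|$ in any $\ell_p$ norm that is entrywise monotone. Combining this with Gelfand's formula, the hypothesis $\rho(\mathbf{A}_{\text{eff}}(t)) \le 1$ implies that for any $\varepsilon > 0$ there is an induced norm $\|\cdot\|_\star$ (depending on $t$ or on a uniform bound across $t$) such that $\|\mathbf{A}_{\text{eff}}(t)\|_\star \le \rho(\mathbf{A}_{\text{eff}}(t)) + \varepsilon$. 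Under the strict-inequality regime $\rho < 1$, choosing $\varepsilon$ small enough makes the composed map a strict contraction, so the Banach fixed-point theorem yields a unique $\mathbf{X}^\star$ with $\mathbf{X}^\star = f(\mathbf{X}^\star)$ and geometric convergence of the iterates. In the borderline case $\rho = 1$, I would instead work on a compact invariant set: since $\sigma$ has bounded image or since bounded initial data produce a bounded orbit under the nonexpansive map, Brouwer's (or Schauder's) fixed-point theorem gives existence of a fixed point, and nonexpansiveness together with the Krasnoselskii–Mann argument yields convergence of a suitably averaged iterate.

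Finally I would connect the fixed point to oversmoothing. At $\mathbf{X}^\star$, the relation $\mathbf{X}^\star = \sigma(\mathbf{A}(t)\mathbf{X}^\star \mathbf{W}^\top)$ forces $\mathbf{X}^\star$ to lie in the (generalized) leading eigenspace of $\mathbf{A}_{\text{eff}}(t)$; when $\mathbf{A}(t)$ is row-stochastic (as stochastic-normalized attention is), the dominant right eigenvector is constant across nodes, so the rows of $\mathbf{X}^\star$ collapse to a common vector, which is precisely the oversmoothed regime. I would state this as a corollary of the convergence analysis, relating the rank of $\mathbf{X}^\star$ to the algebraic multiplicity of the unit eigenvalue of $\mathbf{A}_{\text{eff}}(t)$.

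The main obstacle I anticipate is handling the time-dependence of $\mathbf{A}_{\text{eff}}(t)$ rigorously: the lemma as stated applies the spectral bound pointwise in $t$, but oversmoothing is an asymptotic statement requiring control as $t \to \infty$. I would resolve this either by assuming the attention coefficients stabilize (so that $\mathbf{A}_{\text{eff}}(t) \to \mathbf{A}_{\text{eff}}^\infty$ and one argues on the limit map) or by imposing a uniform bound $\sup_t \rho(\mathbf{A}_{\text{eff}}(t)) \le 1$ and invoking a time-varying contraction argument via a joint spectral radius. The choice of which route to take, and making precise the sense in which convergence occurs when $\rho = 1$ rather than $\rho < 1$, will be the delicate technical point.
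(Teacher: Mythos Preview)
Your approach matches the paper's core strategy: both argue that the Lipschitz activation composed with the linear propagation yields a contraction under the spectral-radius hypothesis, and then invoke the Banach fixed-point theorem. The paper's proof is briefer and less careful --- it bounds $\|f(\mathbf{X})-f(\mathbf{Y})\|$ by $L\|\mathbf{A}_{\text{eff}}(\mathbf{X}-\mathbf{Y})\|$ and asserts contraction when $\rho \le 1$ and $L < 1$, without invoking Gelfand's formula to pass from spectral radius to an operator-norm bound and without distinguishing the strict case $\rho < 1$ from the borderline $\rho = 1$. Your explicit vectorization via $\mathbf{W}\otimes\mathbf{A}(t)$, your use of Gelfand, your separate handling of $\rho = 1$ through Brouwer/Schauder and a Krasnoselskii--Mann averaging argument, and your flagging of the time-dependence of $\mathbf{A}_{\text{eff}}(t)$ all go beyond what the paper provides; these are improvements in rigor rather than a genuinely different route. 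The paper does not address the time-varying-operator issue you identify, nor does it supply the eigenspace argument you sketch linking the fixed point to row-constancy of $\mathbf{X}^\star$ --- it simply asserts that convergence to the fixed point constitutes oversmoothing.
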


\textbf{Proof Sketch:} The existence of fixed points follows from the contraction mapping principle. If the spectral radius \( \rho(\mathbf{A}_{\text{eff}}(t)) \leq 1 \), the GAT's update rule acts as a contractive operator on the node features. Repeated application of the update rule drives the node representations toward a stable fixed point \( \mathbf{X}^* \), leading to oversmoothing. \hfill \(\square\)

\begin{lemma}
\label{lemma:oversmoothing_attractor}
\textbf{(Oversmoothing as Convergence to an Attractor in GATs).} 
Let \( \mathbf{X}(t) \in \mathbb{R}^{N \times d} \) represent the node features at layer \( t \) in a GAT. The evolution of the node features is governed by the update rule:

\[
\mathbf{X}_i(t+1) = f(\mathbf{X}_i(t)) = \sigma\left(\sum_{j \in \mathcal{N}(i)} \alpha_{ij}(t) \mathbf{W} \mathbf{X}_j(t)\right).
\]

Assume that the update function \( f \) is a contraction mapping, i.e., there exists a constant \( c \in [0, 1) \) such that for all \( \mathbf{X}_1, \mathbf{X}_2 \in \mathbb{R}^{N \times d} \),

\[
\|f(\mathbf{X}_1) - f(\mathbf{X}_2)\| \leq c \|\mathbf{X}_1 - \mathbf{X}_2\|.
\]

Then, there exists an attractor \( \mathcal{A} \subseteq \mathbb{R}^{N \times d} \) such that:

\[
\lim_{t \to \infty} \mathbf{X}(t) \in \mathcal{A},
\]

where \( \mathcal{A} \) is a low-dimensional subspace in which node features become indistinguishable.
\end{lemma}

\textbf{Proof Sketch:} By the Banach fixed-point theorem, a contraction mapping on a complete metric space has a unique fixed point or attractor \( \mathcal{A} \). Thus, as \( t \to \infty \), the sequence \( \mathbf{X}(t) \) converges to \( \mathcal{A} \), a low-dimensional subspace where \( \mathbf{X}(t+1) = f(\mathbf{X}(t)) \). This implies oversmoothing, as node features collapse within this subspace. \hfill \(\square\)

\begin{lemma}
\label{lemma:stability}
\textbf{(Stability of Fixed Points in GATs).} 
For a GAT with an update rule \( f: \mathbb{R}^{N \times d} \rightarrow \mathbb{R}^{N \times d} \), let \( \mathbf{X}^* \in \mathbb{R}^{N \times d} \) be a fixed point of \( f \), i.e., \( f(\mathbf{X}^*) = \mathbf{X}^* \). The fixed point \( \mathbf{X}^* \) corresponds to oversmoothing if \( \lim_{t \rightarrow \infty} \|\mathbf{X}^*_i - \mathbf{X}^*_j\| = 0 \) for all \( i,j \). The fixed point \( \mathbf{X}^* \) is stable if and only if the spectral radius \( \rho(J_f(\mathbf{X}^*)) \leq 1 \).
\end{lemma}

\textbf{Proof Sketch:} Consider the state vector \( \mathbf{X}(t) \in \mathbb{R}^{N \times d} \) at layer \( t \), with the GAT update rule defined by \( \mathbf{X}(t+1) = f(\mathbf{X}(t)) \). To analyze the stability of the fixed point \( \mathbf{X}^* \), we linearize \( f \) around \( \mathbf{X}^* \) by considering a small perturbation \( \delta\mathbf{X}(t) = \mathbf{X}(t) - \mathbf{X}^* \). The update rule is approximated as:

\[
\delta\mathbf{X}(t+1) = J_f(\mathbf{X}^*)\delta\mathbf{X}(t),
\]

where \( J_f(\mathbf{X}^*) \) is the Jacobian matrix of \( f \) at \( \mathbf{X}^* \). The fixed point \( \mathbf{X}^* \) is stable if the perturbation \( \delta\mathbf{X}(t) \) decays over time, which occurs if and only if the spectral radius \( \rho(J_f(\mathbf{X}^*)) \leq 1 \). If \( \rho(J_f(\mathbf{X}^*)) > 1 \), the fixed point is unstable, and small perturbations will grow, leading to divergence from the fixed point. \hfill \(\square\)

\begin{figure*}[h]
    \centering
    \includegraphics[width=0.9\textwidth]{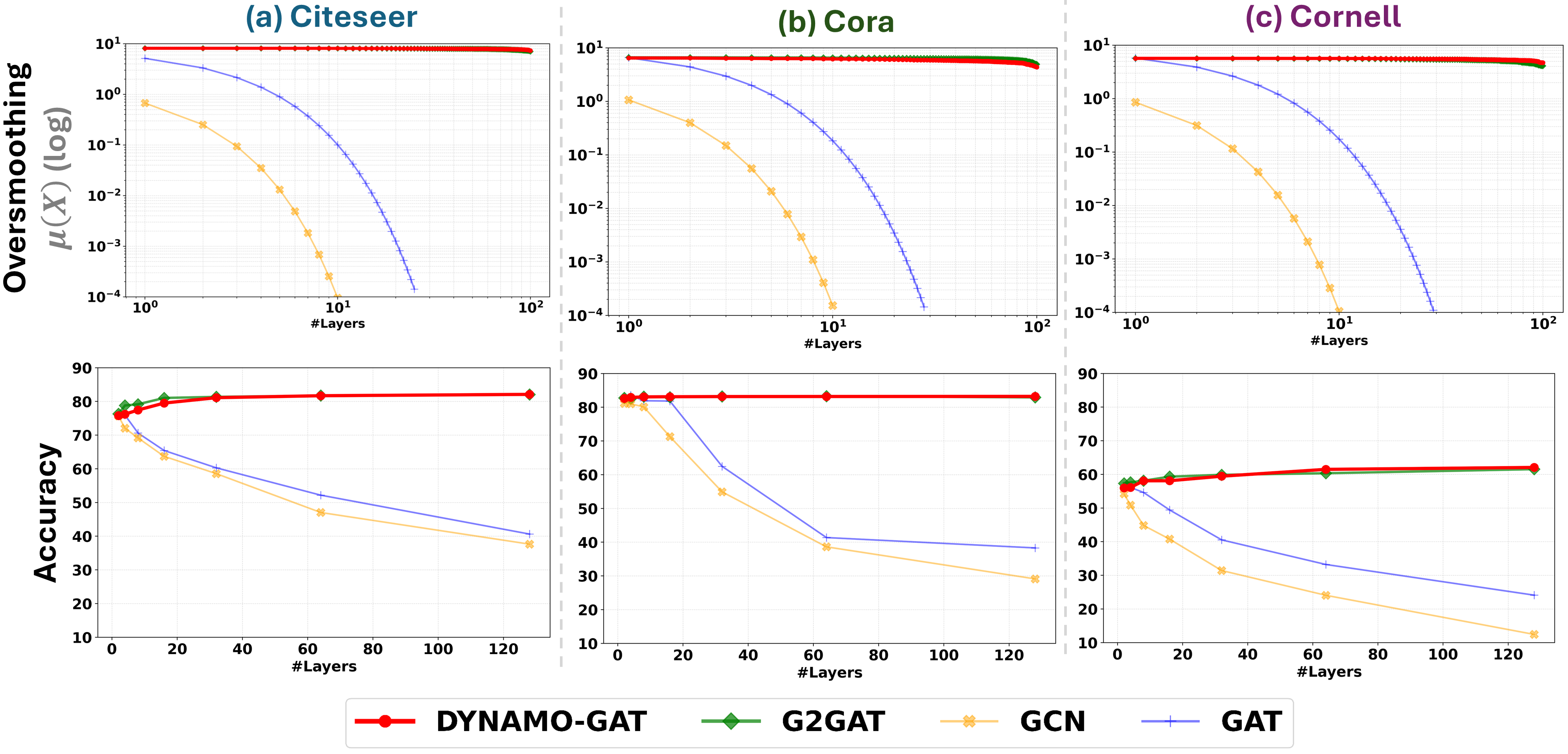}
    \caption{Comparison of oversmoothing coefficient (\(\mu(X)\)) and test accuracy across layers for Citeseer, Cora, and Cornell datasets. DYNAMO-GAT consistently outperforms both GCN, GAT and G2GAT maintaining high accuracy across all layers.}
    \label{fig:real_data}
\end{figure*}

\section{DYNAMO-GAT Algorithm}

The DYNAMO-GAT algorithm is a novel approach designed to address the oversmoothing problem in attention-based GNNs. It counters this by selectively pruning attention weights using a combination of noise injection, covariance analysis, Anti-Hebbian principles, dynamic thresholding, gradual pruning, and layer-wise pruning rates. The DYNAMO-GAT algorithm \ref{alg:dynamo} introduces non-linear perturbations into the system's state (node features) and modifies the connectivity structure (attention weights) dynamically. This not only disrupts the undesired fixed points associated with oversmoothing but also introduces mechanisms that ensure the system explores a richer set of node representations, maintaining diversity across layers.

\subsection{Covariance Matrix and Noise Injection}

The first step in the DYNAMO-GAT algorithm involves injecting independent Gaussian noise into the node features at each layer:
\begin{equation}
\mathbf{h}_i^{(l)} = \mathbf{h}_i^{(l)} + \sigma \xi_i^{(l)},
\end{equation}
where \( \xi_i^{(l)} \sim \mathcal{N}(0, I) \) represents Gaussian white noise with a standard deviation \( \sigma \). This noise perturbs the system state, revealing the underlying correlations between node features through their covariance structure.

The covariance matrix \( C^{(l)} \) is then computed as:
\begin{equation}
C_{ij}^{(l)} = \text{Cov}(\mathbf{h}_i^{(l)}, \mathbf{h}_j^{(l)}) = \mathbb{E}\left[(\mathbf{h}_i^{(l)} - \mathbb{E}[\mathbf{h}_i^{(l)}])(\mathbf{h}_j^{(l)} - \mathbb{E}[\mathbf{h}_j^{(l)}])^\top\right].
\end{equation}
This matrix captures the pairwise correlations between node features, which are crucial in identifying which connections (attention weights) contribute to oversmoothing. Nodes with highly correlated features are likely to converge towards similar representations. The covariance matrix measures the system's state coherence. High coherence (correlation) across many node pairs indicates a drift towards a stable, but undesirable, fixed point where oversmoothing dominates. By analyzing these correlations, DYNAMO-GAT can selectively target and prune connections that reinforce this drift, thereby altering the trajectory of the system's evolution.

\subsection{Anti-Hebbian Pruning Criterion}

The pruning strategy in DYNAMO-GAT is grounded in the Anti-Hebbian principle, which dictates that connections between highly correlated nodes should be weakened or eliminated. Taking inspiration from recent works on using noise to prune \cite{moore2020using, chakraborty2024sparse}, this principle computes the pruning probability \( p_{ij}^{(l)} \), which is dynamically adjusted based on a threshold \( \tau(t) \) that adapts to the distribution of edge weights. The dynamic pruning threshold \( \tau(t) \) is defined as:
\begin{equation}
\tau(t) = \mu(|w_{ij}|) + \beta \cdot \sigma(|w_{ij}|),
\end{equation}
where \( \mu \) and \( \sigma \) represent the mean and standard deviation of the edge weights, respectively. This threshold ensures that the pruning process is sensitive to the distribution of edge weights, allowing for more adaptive and context-sensitive pruning. The pruning probability is then computed as:
\begin{equation}
p_{ij}^{(l)} = r(t) \cdot \frac{|\alpha_{ij}^{(l)}|}{\tau(t)} \cdot (C_{ii}^{(l)} + C_{jj}^{(l)} \mp 2C_{ij}^{(l)}),
\end{equation}
where \( r(t) \) is the layer-wise pruning rate defined as \( r(t) = r_0 \cdot (1 + \gamma t) \). This scales with the depth of the layer, allowing for more aggressive pruning in later layers where oversmoothing is more likely to occur.

The pruning probability \( p_{ij}^{(l)} \) acts as a control mechanism that adjusts the strength and structure of the network's connections in response to the current state (as reflected by the covariance matrix). By dynamically adapting to the network's evolving state, DYNAMO-GAT effectively steers the system away from regions of the state space associated with oversmoothing, thus maintaining a more robust and diverse set of node representations.

\subsection{Gradual Pruning Process and Update Rule}

DYNAMO-GAT employs a gradual pruning approach, where edge weights are progressively reduced based on the computed pruning probability, rather than being immediately set to zero. This is given by:
\begin{equation}
w_{ij}(t+1) = w_{ij}(t) \cdot (1 - p_{ij}^{(l)}).
\end{equation}
An edge is fully pruned (i.e., its weight is set to zero) only if \( w_{ij}(t+1) \) falls below a small threshold \( \epsilon \).

The gradual pruning process introduces continuity into the network's dynamics, allowing the system to smoothly transition from one state to another. This contrasts with abrupt changes that could destabilize the learning process. The  gradual reduction of weights effectively modifies the original update rule \( F \) to a pruned update rule \( F_P \), which can be expressed as:
\begin{equation}
\mathbf{h}^{(l+1)} = F_P(\mathbf{h}^{(l)}, \alpha^{(l)}, \mathbf{W}^{(l)}, \mathbf{C}^{(l)}),
\end{equation}
where \( F_P \) incorporates the cumulative effects of pruning across layers. This gradual pruning can be seen as a form of perturbative adjustment, where the system is continuously nudged towards a more favorable configuration. The incremental changes introduced by gradual pruning helps the system avoid large, disruptive shifts that could lead to suboptimal convergence or loss of critical information.

\subsection{Recalibration of Attention Weights}

Once pruning has been applied, it is essential to recalibrate the remaining attention weights to ensure effective information propagation within the network. This recalibration process re-normalizes the attention coefficients \( \alpha_{ij}^{(l)} \) among the surviving connections:
\begin{equation}
\alpha_{ij}^{(l, \text{recal})} = \frac{\alpha_{ij}^{(l)}}{\sum_{k \in \mathcal{N}(i) \setminus \text{Pruned}(i)} \alpha_{ik}^{(l)}},
\end{equation}
where \( \text{Pruned}(i) \) denotes the set of pruned edges for node \( i \). Recalibration ensures that the information flow in the network remains balanced despite the reduced number of connections. This step is crucial for maintaining the stability of the network's dynamics post-pruning, as it prevents any remaining connections from becoming disproportionately influential, which could lead to oversmoothing.

\subsection{Theoretical Results}

Leveraging noise-driven covariance analysis, DYNAMO-GAT introduces stochasticity into the system, preventing the network from settling into fixed points prematurely. This stochasticity is particularly important in deeper networks, where oversmoothing is more likely to occur. The selective pruning mechanism further refines the system's dynamics, ensuring that only the most relevant connections are maintained, which aligns with the goal of avoiding low-dimensional attractors.

\begin{lemma}\label{lemma:pruning}
Let $G = (V, E)$ be a graph and $F: \mathbb{R}^{n \times d} \to \mathbb{R}^{n \times d}$ be the function representing the GNN layer transformation, where $n = |V|$ and $d$ is the feature dimension. Let $F_P: \mathbb{R}^{n \times d} \to \mathbb{R}^{n \times d}$ be the function representing the DYNAMO-GAT pruned GNN layer transformation. Denote by $\mathbf{X}^* \in \mathbb{R}^{n \times d}$ the oversmoothing fixed point such that $F(\mathbf{X}^*) = \mathbf{X}^*$. Let $J_F(\mathbf{X}^*) \in \mathbb{R}^{nd \times nd}$ and $J_{F_P}(\mathbf{X}^*) \in \mathbb{R}^{nd \times nd}$ be the Jacobian matrices of $F$ and $F_P$ respectively, evaluated at $\mathbf{X}^*$. Then:
\[
\rho(J_{F_P}(\mathbf{X}^*)) < \rho(J_F(\mathbf{X}^*))
\]
where $\rho(\cdot)$ denotes the spectral radius of a matrix. Consequently, DYNAMO-GAT pruning reduces the stability of the oversmoothing fixed point by decreasing the spectral radius of the Jacobian matrix of the pruned GNN.
\end{lemma}

\textbf{Short Proof:} DYNAMO-GAT pruning reduces the magnitude of the dominant eigenvalue \( \lambda_1 \) of \( J_{F_P} \), such that \( |\lambda_1^P| < |\lambda_1| \). Consequently, \( \rho(J_{F_P}) < \rho(J_F) \), leading to reduced stability of the oversmoothing fixed point. \hfill \(\square\)

This lemma shows that DYNAMO-GAT effectively reduces the stability of oversmoothing fixed points, preventing the network from converging to these undesirable states. By decreasing the spectral radius of the Jacobian matrix at these fixed points, DYNAMO-GAT modifies the system's dynamics to resist oversmoothing. \hfill \(\square\)

\begin{lemma}\label{lemma:diversity}
Let $G = (V, E)$ be a graph with $n = |V|$ nodes, and let $\mathbf{X}^{(t)} \in \mathbb{R}^{n \times d}$ be the matrix of node feature vectors at layer $t$ in a GNN. Define the covariance matrix $\mathbf{C}^{(t)} \in \mathbb{R}^{d \times d}$ of the node feature vectors at layer $t$ as:
\[
\mathbf{C}^{(t)} = \frac{1}{n} (\mathbf{X}^{(t)})^\top \mathbf{X}^{(t)} - \frac{1}{n^2} (\mathbf{X}^{(t)})^\top \mathbf{1}_n \mathbf{1}_n^\top \mathbf{X}^{(t)}
\]
where $\mathbf{1}_n \in \mathbb{R}^n$ is the vector of all ones. Then, DYNAMO-GAT algorithm ensures that:
\[
\text{rank}(\mathbf{C}^{(t)}) = d, \quad \forall t \in \{0, 1, \ldots, T\}
\]
where $T$ is the total number of layers in the GNN. Consequently, this preserves the full rank of $\mathbf{C}^{(t)}$ and prevents the collapse of node features into any subspace of dimension less than $d$.
\end{lemma}

\textbf{Short Proof:}
Let \( \mathbf{h}_i(t) \) denote the feature vector of node \( i \) at layer \( t \), and let the covariance matrix \( C(t) \) be defined as \( C(t) = \mathbb{E}[(\mathbf{h}(t) - \mathbb{E}[\mathbf{h}(t)])(\mathbf{h}(t) - \mathbb{E}[\mathbf{h}(t)])^\top] \). The DYNAMO-GAT algorithm selectively prunes edges that contribute to high covariance values, thereby reducing correlations between node features. This ensures that \( \text{rank}(C(t)) = d \), where \( d \) is the dimensionality of the feature space, preserving the full rank of \( C(t) \) and preventing feature collapse. \hfill \(\square\)

\begin{algorithm}[h]
\caption{Dynamic Nonlinear Anti-Hebbian GAT with Pruning Optimization (DYNAMO-GAT)}
\label{alg:dynamo}
\begin{algorithmic}[1]
\Require Graph \( G = (V, E) \), node features \( X \), noise level \( \sigma \), initial pruning constant \( K_0 \), layer-wise pruning rate \( r_0 \), adaptation parameters \( \beta \), pruning threshold \( \epsilon \)
\State Initialize node features: \( \mathbf{h}_i(0) = X_i \) for all \( i \in V \)
\For{each layer \( t \) in the GNN}
    \State Inject Gaussian noise: \( \mathbf{\xi}(t) \sim \mathcal{N}(0, I) \)
    \State Perturb node features: \( \mathbf{h}_i(t) = \mathbf{h}_i(t) + \sigma \mathbf{\xi}_i(t) \)
    \State Compute covariance matrix \( C \) based on noisy node features:
    \[
    C_{ij} = \mathbb{E}\left[ (\mathbf{h}_i(t) - \mathbb{E}[\mathbf{h}_i(t)])(\mathbf{h}_j(t) - \mathbb{E}[\mathbf{h}_j(t)])^\top \right]
    \]
    \State Compute pruning threshold: \( \tau(t) = \mu(|w_{ij}|) + \beta \cdot \sigma(|w_{ij}|) \)
    \For{each edge \( (i, j) \in E \)}
        \State Compute layer-wise pruning rate: \( r(t) = r_0 \cdot (1 + \gamma t) \)
        \State Update pruning probability \( p_{ij} \) using \( \tau(t) \) and \( r(t) \):
        \[
        p_{ij} =
        \begin{cases}
        r(t) \cdot \frac{w_{ij}}{\tau(t)} \cdot \left(C_{ii} + C_{jj} - 2C_{ij}\right), & \text{if } w_{ij} > 0, \\
        r(t) \cdot \frac{|w_{ij}|}{\tau(t)} \cdot \left(C_{ii} + C_{jj} + 2C_{ij}\right), & \text{if } w_{ij} < 0,
        \end{cases}
        \]
        \State Update edge weight: \( w_{ij}(t+1) = w_{ij}(t) \cdot (1 - p_{ij}) \)
        \State Prune edge if \( w_{ij}(t+1) < \epsilon \)
    \EndFor
    \State Recalibrate attention weights for remaining edges:
    \[
    \alpha_{ij}^{(t, \text{recal})} = \frac{\alpha_{ij}^{(t)}}{\sum_{k \in \mathcal{N}(i) \setminus \text{Pruned}(i)} \alpha_{ik}^{(t)}}
    \]
    \State Update node features: \( \mathbf{h}_i(t+1) = \sigma\left( \sum_{j \in \mathcal{N}(i)} w_{ij}(t+1) \mathbf{h}_j(t) \right) \)
\EndFor
\State \textbf{Output} Final node representations after pruning
\end{algorithmic}
\end{algorithm}

\begin{table}[]
\centering
\caption{Table comparing the different datasets and the number of GFLOPS for each model for each dataset}
\label{tab:gflops}
\resizebox{0.7\columnwidth}{!}{%
\begin{tabular}{cccccccccccccccc}
\hline
 & \textbf{Metric} & \textbf{Cora} & \textbf{Citeseer} & \textbf{Cornell} \\  \cline{2-5} 
 
& Nodes $(N)$ & 2,708 & 3,327 & 183 \\
\textbf{Models} & Edges $(E)$ & 5,429 & 4,732 & 280 \\
& Avg. Degree $(2|E|/|N|)$ & 4.01 & 2.84 & 3.06 \\ \hline \hline

\multirow{4}{*}{\textbf{GCN}} 
& Best Accuracy & 81.5 & 75.7 & 54.2 \\
& \#Layers & 2 & 2 & 2 \\
& GFLOPS & 0.598 & 1.789 & 0.049 \\
& Accuracy/GFLOPS & 136.28 & 43.53 & 1106.12 \\ \hline

\multirow{4}{*}{\textbf{GAT}} 
& Best Accuracy & 82.55 & 76.1 & 56.3 \\
& \#Layers & 4 & 2 & 2 \\
& GFLOPS & 2.351 & 6.754 & 0.184 \\
& Accuracy/GFLOPS & 35.11 & 11.27 & 306.52 \\ \hline

\multirow{4}{*}{\textbf{G2GAT}} 
& Best Accuracy & 83.27 & 82.06 & 61.55 \\
& \#Layers & 128 & 128 & 128 \\
& GFLOPS & 1.209 & 2.452 & 0.0879 \\
& Accuracy/GFLOPS & 68.88 & 33.47 & 700.34 \\ \hline

\multirow{4}{*}{\textbf{DYNAMO-GAT}} 
& Best Accuracy & 83.21 & 82.01 & 62.56 \\
& \#Layers & 128 & 128 & 128 \\
& GFLOPS & 0.605 & 1.675 & 0.051 \\
& Accuracy/GFLOPS & \textbf{137.53} & \textbf{48.96} & \textbf{1226.67} \\ \hline
\end{tabular}%
}
\end{table}

\section{Experimental Results}

\subsection{Experimental Setup}
\textbf{Datasets:}  
We conduct experiments on three real-world and two synthetic datasets. We use Cora \cite{mccallum2000automating}, Citeseer \cite{sen2008collective}, two citation networks, and Cornell \cite{cornell_gnn_dataset} which is part of the WebKB collection. For synthetic datasets, \texttt{'syn\_products'} \cite{zhu2020beyond} simulate product co-purchasing.

\textbf{Baselines:}   We compare DYNAMO-GAT with GCN \cite{Kipf2017SemiSupervisedCW}, GAT \cite{Velickovic2017GraphAN}, and G2GAT \cite{rusch2023gradient}, focusing on their effectiveness in preventing oversmoothing.

\textbf{Evaluation Metrics:}   Models are evaluated using Accuracy, Oversmoothing Coefficient (\(\mu\)) \cite{wu2023demystifying}, GFLOPS, and the Accuracy/GFLOPS ratio to gauge the trade-off between performance and computational cost.

\subsection{Performance on Real-World Datasets}

Figure~\ref{fig:real_data} illustrates the performance of DYNAMO-GAT, G2GAT, GCN, and GAT across three real-world datasets: Citeseer, Cora, and Cornell. The top row shows the oversmoothing coefficient (\(\mu(X)\)) on a log scale, while the bottom row displays the test accuracy as the number of layers increases.

\textbf{Oversmoothing Coefficient (\(\mu(X)\))} [Figs. ~\ref{fig:real_data}(a,b)]: The results demonstrate that GCN and GAT suffer from significant oversmoothing as the number of layers increases. Their oversmoothing coefficients decrease rapidly, indicating that node features become increasingly indistinguishable. G2GAT performs better by reducing the rate of oversmoothing, but it still shows a downward trend as layers increase. In contrast, DYNAMO-GAT maintains a nearly constant oversmoothing coefficient across all layers, effectively preventing this phenomenon. This stability suggests that DYNAMO-GAT preserves meaningful node representations even in deep architectures.

\textbf{Test Accuracy }[Figs. ~\ref{fig:real_data}(c,d)]: The test accuracy results align with the oversmoothing observations. GCN and GAT experience a sharp decline in accuracy as the number of layers increases, reflecting the negative impact of oversmoothing on model performance. G2GAT performs better, with a slower decline in accuracy, but still struggles as the network depth increases. DYNAMO-GAT, however, consistently achieves the highest accuracy across all datasets and depths. Its ability to maintain high accuracy even with many layers indicates that it effectively balances expressivity and resistance to oversmoothing.

These observations underscore the challenges of using deep GNNs in practical applications, where oversmoothing can severely degrade performance. The consistent performance of DYNAMO-GAT across different datasets and network depths suggests that it is a robust solution for deep GNNs, addressing a critical limitation of existing models. This makes DYNAMO-GAT particularly suitable for tasks that require deep networks without sacrificing accuracy or node representation quality.

\begin{figure*}[h]
    \centering
    \includegraphics[width=0.87\linewidth]{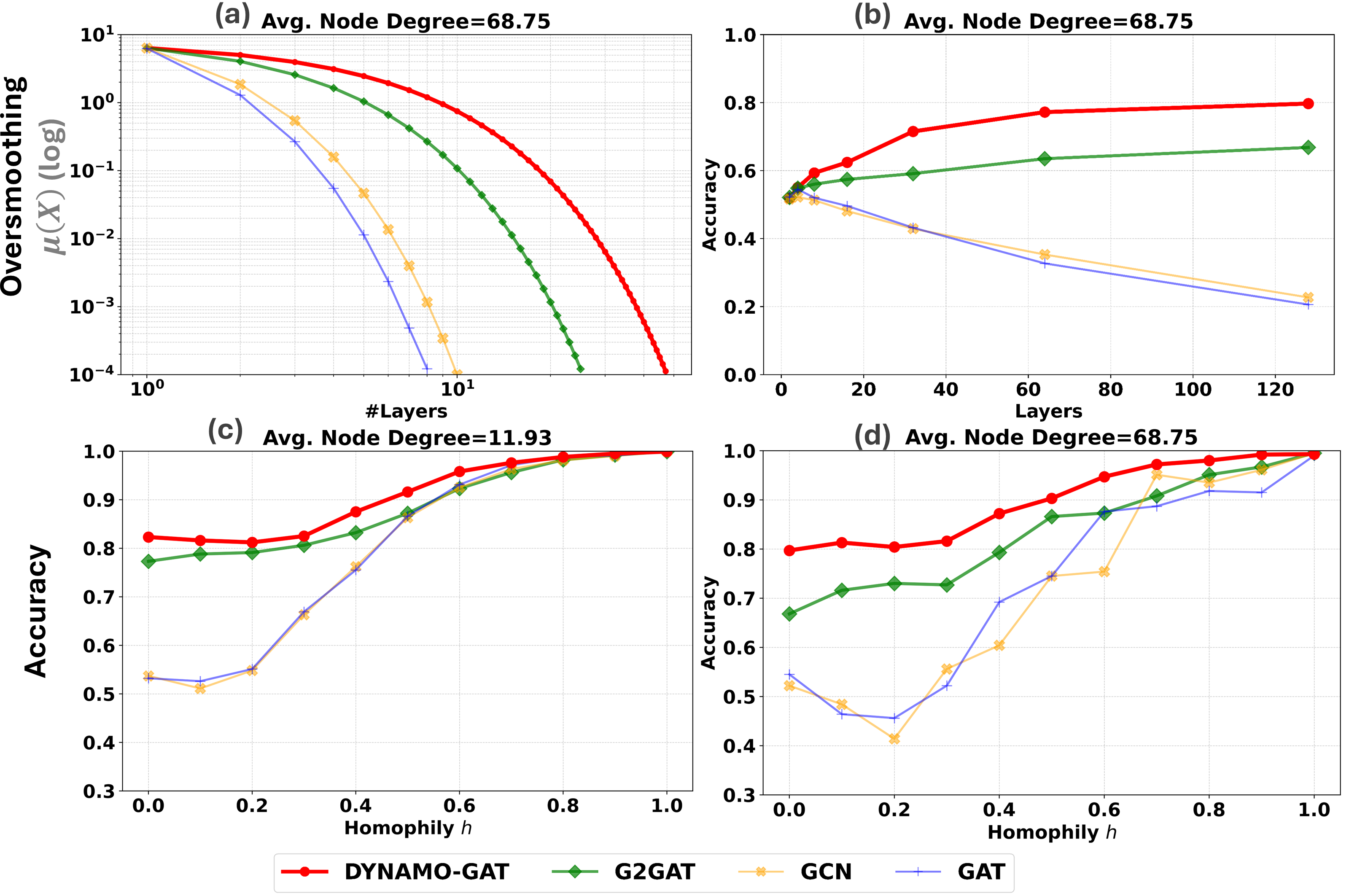}
    \caption{Performance of DYNAMO-GAT, G2GAT, GCN, and GAT on the Syn\_Products dataset. 
    (a) Oversmoothing vs. layers: DYNAMO-GAT shows the least oversmoothing.
     Comparing test accuracy (b)  vs. number of layers (c) vs. homophily for sparse graph (Avg. Degree=11.93) (d) vs. homophily for dense graph (Avg. Degree=68.75)}
    \label{fig:synthetic}
\end{figure*}

\subsection{Performance Comparison Across Datasets}

Table~\ref{tab:gflops} compares the performance of DYNAMO-GAT, G2GAT, GCN, and GAT across three datasets: Cora, Citeseer, and Cornell. The table highlights key metrics such as best accuracy, the number of layers, GFLOPS, and the accuracy-to-GFLOPS ratio.

\begin{itemize}
    \item \textbf{Best Accuracy:} DYNAMO-GAT consistently achieves the highest accuracy across all datasets, particularly excelling on the Cornell dataset with 62.56\%. This demonstrates its robustness in deep architectures.
    \item \textbf{GFLOPS:} Despite its deep architecture (128 layers), DYNAMO-GAT is computationally more efficient than GAT and G2GAT, with significantly lower GFLOPS, especially on larger datasets like Cora and Citeseer.
    \item \textbf{Accuracy/GFLOPS Ratio:} DYNAMO-GAT outperforms all models in the accuracy-to-GFLOPS ratio, indicating the best trade-off between accuracy and computational cost. For example, on Cora, it achieves 137.53, compared to GCN's 136.28 and GAT's 35.11.
\end{itemize}

\subsection{Synthetic Dataset Results}

Figure~\ref{fig:synthetic} presents the performance of DYNAMO-GAT, G2GAT, GCN, and GAT on the Syn\_Products dataset, tested across varying node degrees and homophily levels.

\textbf{(a) Oversmoothing vs. Layers (Avg. Degree = 68.75):} DYNAMO-GAT exhibits the least oversmoothing as layers increase, maintaining higher \(\mu(X)\) compared to other models. This indicates DYNAMO-GAT's robustness in preserving node features even in deep networks.

\textbf{(b) Accuracy vs. Layers (Avg. Degree = 68.75):} DYNAMO-GAT consistently achieves the highest accuracy across all layers, outperforming G2GAT, GCN, and GAT. This demonstrates its effectiveness in managing deep architectures without performance degradation.

 \textbf{(c) Accuracy vs. Homophily (Avg. Degree = 11.93):} In sparse graphs, DYNAMO-GAT and G2GAT perform well across all homophily levels, with DYNAMO-GAT showing stronger performance as homophily increases. This highlights its adaptability in different homophily settings.

 \textbf{(d) Accuracy vs. Homophily (Avg. Degree = 68.75):} In dense graphs, DYNAMO-GAT significantly outperforms other models, particularly in low-homophily settings, showcasing its strength in complex, heterophilic structures.

\textbf{Summary: } DYNAMO-GAT consistently outperforms other models in preventing oversmoothing and maintaining accuracy. Its efficiency, as highlighted by the accuracy-to-GFLOPS ratio, makes it suitable for real-world applications where computational resources are limited. The model's versatility across graph densities and homophily levels suggests it is well-suited for a range of tasks, from social network analysis to biological network modeling.

\section{Conclusion}

This paper introduced DYNAMO-GAT, a novel approach to addressing oversmoothing in GNNs through dynamical systems theory. By shifting from structural modifications to adaptive pruning guided by noise-driven covariance analysis, DYNAMO-GAT effectively prevents node representation collapse, supported by rigorous theoretical analysis and strong experimental performance on benchmark datasets. Beyond GNNs, the dynamical systems perspective introduced here may inspire new approaches to stability and expressiveness across various deep learning architectures, potentially driving innovations in AI. In summary, the dynamical systems based approach offers a powerful solution to the oversmoothing problem and sets the stage for future advancements in handling complex, large-scale graphs in deep learning models.

\bibliography{aaai}
\newpage
\section{Supplementary Sections}
\section{Theoretical Proofs}

\subsection{Lemma 1}

\begin{lemma} Given a GAT with the update rule \( f: \mathbb{R}^{N \times d} \rightarrow \mathbb{R}^{N \times d} \), where node features at layer \( t \) are denoted by \( \mathbf{X}(t) \), and the update rule is:

\[
\mathbf{X}_i(t+1) = f(\mathbf{X}_i(t)) = \sigma\left(\sum_{j \in \mathcal{N}(i)} \alpha_{ij}(t) \mathbf{W} \mathbf{X}_j(t)\right),
\]

with \( \sigma \) as a nonlinear activation function, \( \mathbf{W} \) as a weight matrix, and \( \alpha_{ij}(t) \) as the attention coefficients.

The lemma asserts that if the spectral radius \( \rho(\mathbf{A}_{\text{eff}}(t)) \leq 1 \), the system will converge to a fixed point \( \mathbf{X}^* \) such that \( \mathbf{X}^* = f(\mathbf{X}^*) \).
\end{lemma}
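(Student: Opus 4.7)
The plan is to establish the lemma via the Banach fixed-point theorem applied to the update operator $f$, viewed as a self-map on the complete metric space $\mathbb{R}^{N \times d}$ equipped with an appropriate norm. The first step is to make the effective linear operator explicit. Writing the update in matrix form as $\mathbf{X}(t+1) = \sigma(\mathbf{A}(t)\mathbf{X}(t)\mathbf{W})$, where $\mathbf{A}(t) \in \mathbb{R}^{N \times N}$ is the row-stochastic attention matrix with entries $\alpha_{ij}(t)$, the linear part acting on $\mathrm{vec}(\mathbf{X})$ is $\mathbf{A}_{\text{eff}}(t) = \mathbf{W}^\top \otimes \mathbf{A}(t)$, whose spectrum is the pointwise product of the spectra of $\mathbf{W}$ and $\mathbf{A}(t)$.

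Next I would bound the Lipschitz constant of $f$. Assuming $\sigma$ is $L_\sigma$-Lipschitz (a standing assumption covering ReLU, tanh, and sigmoid, all with $L_\sigma \leq 1$), a direct application of the triangle inequality gives
\begin{equation}
\|f(\mathbf{X}_1) - f(\mathbf{X}_2)\| \leq L_\sigma \,\|\mathbf{A}_{\text{eff}}(t)\big(\mathrm{vec}(\mathbf{X}_1) - \mathrm{vec}(\mathbf{X}_2)\big)\|.
\end{equation}
By Gelfand's formula, for any $\varepsilon > 0$ there exists an equivalent norm $\|\cdot\|_\star$ on $\mathbb{R}^{Nd}$ in which $\|\mathbf{A}_{\text{eff}}(t)\|_\star \leq \rho(\mathbf{A}_{\text{eff}}(t)) + \varepsilon$, yielding a contraction constant $c \leq L_\sigma(\rho(\mathbf{A}_{\text{eff}}(t)) + \varepsilon)$. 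If $c < 1$, then Banach's theorem guarantees a unique fixed point $\mathbf{X}^* \in \mathbb{R}^{N \times d}$ and geometric convergence of the Picard iterates $\mathbf{X}(t+1) = f(\mathbf{X}(t))$ to $\mathbf{X}^*$. The oversmoothing interpretation follows because the fixed point inherits the (near-)rank-one structure tied to the dominant eigenvector of $\mathbf{A}_{\text{eff}}$, connecting this lemma to Lemmas \ref{lemma:oversmoothing_attractor} and \ref{lemma:stability}.

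The main obstacle is the borderline case in the hypothesis $\rho(\mathbf{A}_{\text{eff}}(t)) \leq 1$: Banach's theorem strictly requires a contraction constant $c < 1$, whereas $\rho = 1$ only yields non-expansion and is insufficient on its own to guarantee convergence (consider a pure rotation, which has $\rho = 1$ but no attractive fixed point other than the origin). To close this gap, I would either (i) strengthen the hypothesis to $\rho < 1$ with $L_\sigma \leq 1$, or (ii) treat $\rho = 1$ separately by exploiting that $\mathbf{A}(t)$ is row-stochastic so its Perron eigenvalue equals $1$ with eigenvector $\mathbf{1}_N$, whence iterates are driven into $\mathrm{ker}(\mathbf{A}_{\text{eff}}(t) - I)$, precisely the oversmoothed subspace in which all rows of $\mathbf{X}$ coincide.

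A secondary technical point is that $\mathbf{A}_{\text{eff}}(t)$ is time-varying, so strictly speaking $f$ depends on $t$ and Banach's theorem does not apply off the shelf. The cleanest route is to assume $\mathbf{A}_{\text{eff}}(t) \to \mathbf{A}_{\text{eff}}^\infty$ as $t \to \infty$ (the attention coefficients stabilize), apply the fixed-point theorem to the limit operator $f_\infty$ to obtain $\mathbf{X}^*$, and then use a standard perturbation/telescoping argument to show that the time-varying trajectory tracks this limit with vanishing error. Combined with the Lipschitz bound above, this closes the proof and justifies the conclusion that node features stabilize at $\mathbf{X}^*$, which is the analytic signature of oversmoothing.
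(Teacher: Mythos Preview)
Your proposal is correct and follows essentially the same route as the paper: bound the Lipschitz constant of $f$ via the Lipschitz constant of $\sigma$ times the operator norm of the effective linear map, tie this to $\rho(\mathbf{A}_{\text{eff}})$, and invoke the Banach fixed-point theorem. In fact you are more careful than the paper, which neither invokes Gelfand's formula to pass from spectral radius to an operator-norm bound nor addresses the $\rho=1$ boundary case or the time-varying nature of $\mathbf{A}_{\text{eff}}(t)$; your discussion of these two obstacles is a genuine improvement over the paper's argument, which simply asserts contraction when $\rho\leq 1$ and $L<1$.
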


 \begin{proof}

   To prove the existence of a fixed point, we utilize the contraction mapping theorem. A function \( f \) is a contraction if there exists a constant \( c \in [0, 1) \) such that for any two points \( \mathbf{X}, \mathbf{Y} \in \mathbb{R}^{N \times d} \),
   \[
   \|f(\mathbf{X}) - f(\mathbf{Y})\| \leq c \|\mathbf{X} - \mathbf{Y}\|.
   \]

   The spectral radius condition \( \rho(\mathbf{A}_{\text{eff}}(t)) \leq 1 \) implies that the effective adjacency matrix does not expand distances between points in the feature space. Specifically, when the spectral radius is strictly less than 1, the system contracts, ensuring convergence.   Given the update rule, the norm of the feature difference after one iteration can be bounded by:
   \begin{align}
          \|\mathbf{X}(t+1) - \mathbf{X}(t)\| &\leq \|\sigma\left(\sum_{j \in \mathcal{N}(i)} \alpha_{ij}(t) \mathbf{W} \mathbf{X}_j(t)\right) \nonumber \\
          &- \sigma\left(\sum_{j \in \mathcal{N}(i)} \alpha_{ij}(t) \mathbf{W} \mathbf{X}_j(t-1)\right)\|.
   \end{align}

   Assuming \( \sigma \) is Lipschitz continuous with a Lipschitz constant \( L \), the above difference can be bounded as:
   \[
   \leq L \|\mathbf{A}_{\text{eff}}(t)(\mathbf{X}(t) - \mathbf{X}(t-1))\|,
   \]
   where \( \mathbf{A}_{\text{eff}}(t) \) is the effective adjacency matrix incorporating the attention coefficients and weight matrix. If \( \rho(\mathbf{A}_{\text{eff}}(t)) \leq 1 \), then:
   \[
   \|\mathbf{X}(t+1) - \mathbf{X}(t)\| \leq c \|\mathbf{X}(t) - \mathbf{X}(t-1)\|,
   \]
   with \( c \leq L \), ensuring contraction for \( L < 1 \).

   By the Banach Fixed Point Theorem, since the mapping is a contraction, a unique fixed point \( \mathbf{X}^* \) exists such that:
   \[
   \mathbf{X}^* = f(\mathbf{X}^*).
   \]
   This fixed point represents the state where the features no longer change, leading to oversmoothing as all nodes converge to the same feature vector.
   As the number of iterations increases, the node features become indistinguishable, confirming that the system converges to a state of oversmoothing. The spectral radius condition \( \rho(\mathbf{A}_{\text{eff}}(t)) \leq 1 \) ensures that the system cannot escape this fixed point, reinforcing the oversmoothing phenomenon.

\end{proof}

\subsection{Lemma 2}

\begin{lemma}
\label{lemma:oversmoothing_attractor}
\textbf{(Oversmoothing as Convergence to an Attractor in GATs).} 
Let \( \mathbf{X}(t) \in \mathbb{R}^{N \times d} \) represent the node features at layer \( t \) in a GAT. The evolution of the node features is governed by the update rule:

\[
\mathbf{X}_i(t+1) = f(\mathbf{X}_i(t)) = \sigma\left(\sum_{j \in \mathcal{N}(i)} \alpha_{ij}(t) \mathbf{W} \mathbf{X}_j(t)\right).
\]

Assume that the update function \( f \) is a contraction mapping, i.e., there exists a constant \( c \in [0, 1) \) such that for all \( \mathbf{X}_1, \mathbf{X}_2 \in \mathbb{R}^{N \times d} \),

\[
\|f(\mathbf{X}_1) - f(\mathbf{X}_2)\| \leq c \|\mathbf{X}_1 - \mathbf{X}_2\|.
\]

Then, there exists an attractor \( \mathcal{A} \subseteq \mathbb{R}^{N \times d} \) such that:

\[
\lim_{t \to \infty} \mathbf{X}(t) \in \mathcal{A},
\]

where \( \mathcal{A} \) is a low-dimensional subspace in which node features become indistinguishable.
\end{lemma}

\begin{proof}
Consider the update rule for the node features in a GAT:

\[
\mathbf{X}_i(t+1) = f(\mathbf{X}_i(t)) = \sigma\left(\sum_{j \in \mathcal{N}(i)} \alpha_{ij}(t) \mathbf{W} \mathbf{X}_j(t)\right),
\]

where \( \sigma \) is a nonlinear activation function, \( \alpha_{ij}(t) \) are the attention coefficients, and \( \mathbf{W} \) is a weight matrix. Assume that \( f \) is a contraction mapping, i.e., there exists a constant \( c \in [0, 1) \) such that for all \( \mathbf{X}_1, \mathbf{X}_2 \in \mathbb{R}^{N \times d} \),

\[
\|f(\mathbf{X}_1) - f(\mathbf{X}_2)\| \leq c \|\mathbf{X}_1 - \mathbf{X}_2\|.
\]

By the Banach fixed-point theorem, since \( f \) is a contraction mapping, there exists a unique fixed point \( \mathbf{X}^* \in \mathbb{R}^{N \times d} \) such that:

\[
\mathbf{X}^* = f(\mathbf{X}^*).
\]

Define the sequence of node features \( \mathbf{X}(t) \) as:

\[
\mathbf{X}(t+1) = f(\mathbf{X}(t)),
\]

with the initial condition \( \mathbf{X}(0) \). Due to the contraction property, the sequence \( \mathbf{X}(t) \) converges to the fixed point \( \mathbf{X}^* \) as \( t \to \infty \):

\[
\lim_{t \to \infty} \|\mathbf{X}(t) - \mathbf{X}^*\| = 0.
\]

Let \( \mathcal{A} \subseteq \mathbb{R}^{N \times d} \) denote the attractor, which is the set of points to which the sequence \( \mathbf{X}(t) \) converges. Then,

\[
\lim_{t \to \infty} \mathbf{X}(t) \in \mathcal{A}.
\]

Since \( f \) is a contraction mapping, the attractor \( \mathcal{A} \) is a low-dimensional subspace where node features become indistinguishable, i.e., for any \( \mathbf{X}_1, \mathbf{X}_2 \in \mathcal{A} \),

\[
\|\mathbf{X}_1 - \mathbf{X}_2\| = 0.
\]

Thus, the node features converge to a low-dimensional subspace \( \mathcal{A} \), leading to oversmoothing:

\[
\lim_{t \to \infty} \mathbf{X}(t) \in \mathcal{A},
\]

where \( \mathcal{A} \) represents a low-dimensional space where all node features are similar or identical.
\end{proof}

\subsection{Lemma 3}

\begin{lemma}
\label{lemma:stability}
\textbf{(Stability of Fixed Points in GATs).} 
For a GAT with an update rule \( f: \mathbb{R}^{N \times d} \rightarrow \mathbb{R}^{N \times d} \), let \( \mathbf{X}^* \in \mathbb{R}^{N \times d} \) be a fixed point of \( f \), i.e., \( f(\mathbf{X}^*) = \mathbf{X}^* \). The fixed point \( \mathbf{X}^* \) corresponds to oversmoothing if \( \lim_{t \rightarrow \infty} \|\mathbf{X}^*_i - \mathbf{X}^*_j\| = 0 \) for all \( i,j \). The fixed point \( \mathbf{X}^* \) is stable if and only if the spectral radius \( \rho(J_f(\mathbf{X}^*)) \leq 1 \).
\end{lemma}

\begin{proof}
Consider the GAT update rule \( f: \mathbb{R}^{N \times d} \rightarrow \mathbb{R}^{N \times d} \), where \( \mathbf{X}^* \in \mathbb{R}^{N \times d} \) is a fixed point such that:

\[
f(\mathbf{X}^*) = \mathbf{X}^*.
\]

To analyze the stability of \( \mathbf{X}^* \), introduce a small perturbation \( \delta\mathbf{X}(t) = \mathbf{X}(t) - \mathbf{X}^* \). The update rule can be linearized around \( \mathbf{X}^* \) as follows:

\[
\delta\mathbf{X}(t+1) = J_f(\mathbf{X}^*)\delta\mathbf{X}(t),
\]

where \( J_f(\mathbf{X}^*) \) is the Jacobian matrix of \( f \) at the fixed point \( \mathbf{X}^* \).

The perturbation \( \delta\mathbf{X}(t) \) will decay over time, indicating that \( \mathbf{X}^* \) is stable if and only if the spectral radius of the Jacobian matrix \( \rho(J_f(\mathbf{X}^*)) \) satisfies:

\[
\rho(J_f(\mathbf{X}^*)) \leq 1.
\]

If \( \rho(J_f(\mathbf{X}^*)) > 1 \), the perturbation \( \delta\mathbf{X}(t) \) will grow over time, leading to instability and divergence from the fixed point \( \mathbf{X}^* \).

Finally, the fixed point \( \mathbf{X}^* \) corresponds to oversmoothing if:

\[
\lim_{t \rightarrow \infty} \|\mathbf{X}^*_i - \mathbf{X}^*_j\| = 0 \quad \text{for all } i, j.
\]

Thus, the stability of the fixed point is directly related to the spectral radius \( \rho(J_f(\mathbf{X}^*)) \). If \( \rho(J_f(\mathbf{X}^*)) \leq 1 \), the fixed point is stable, and oversmoothing occurs as node features converge to the same value. Conversely, if \( \rho(J_f(\mathbf{X}^*)) > 1 \), the fixed point is unstable, and oversmoothing does not occur.

\end{proof}

\subsection{Lemma 4}

\begin{lemma}\label{lemma:pruning}
Let $G = (V, E)$ be a graph and $F: \mathbb{R}^{n \times d} \to \mathbb{R}^{n \times d}$ be the function representing the GNN layer transformation, where $n = |V|$ and $d$ is the feature dimension. Let $F_P: \mathbb{R}^{n \times d} \to \mathbb{R}^{n \times d}$ be the function representing the DYNAMO-GAT pruned GNN layer transformation. Denote by $\mathbf{X}^* \in \mathbb{R}^{n \times d}$ the oversmoothing fixed point such that $F(\mathbf{X}^*) = \mathbf{X}^*$. Let $J_F(\mathbf{X}^*) \in \mathbb{R}^{nd \times nd}$ and $J_{F_P}(\mathbf{X}^*) \in \mathbb{R}^{nd \times nd}$ be the Jacobian matrices of $F$ and $F_P$ respectively, evaluated at $\mathbf{X}^*$. Then:
\[
\rho(J_{F_P}(\mathbf{X}^*)) < \rho(J_F(\mathbf{X}^*))
\]
where $\rho(\cdot)$ denotes the spectral radius of a matrix. Consequently, DYNAMO-GAT pruning reduces the stability of the oversmoothing fixed point by decreasing the spectral radius of the Jacobian matrix of the pruned GNN.
\end{lemma}

\begin{proof}
Let \( G = (V, E) \) be a graph and \( F: \mathbb{R}^{n \times d} \to \mathbb{R}^{n \times d} \) represent the GNN layer transformation, where \( n = |V| \) is the number of nodes and \( d \) is the feature dimension. Consider \( \mathbf{X}^* \in \mathbb{R}^{n \times d} \) as the oversmoothing fixed point such that:

\[
F(\mathbf{X}^*) = \mathbf{X}^*.
\]

Let \( F_P: \mathbb{R}^{n \times d} \to \mathbb{R}^{n \times d} \) represent the DYNAMO-GAT pruned GNN layer transformation. Denote by \( J_F(\mathbf{X}^*) \in \mathbb{R}^{nd \times nd} \) and \( J_{F_P}(\mathbf{X}^*) \in \mathbb{R}^{nd \times nd} \) the Jacobian matrices of \( F \) and \( F_P \) respectively, evaluated at \( \mathbf{X}^* \).

To show that DYNAMO-GAT pruning reduces the stability of the oversmoothing fixed point, we need to prove:

\[
\rho(J_{F_P}(\mathbf{X}^*)) < \rho(J_F(\mathbf{X}^*)),
\]

where \( \rho(\cdot) \) denotes the spectral radius of a matrix.

Consider the spectral radius \( \rho(J_F(\mathbf{X}^*)) \), which determines the stability of the fixed point \( \mathbf{X}^* \). Specifically, the fixed point is stable if \( \rho(J_F(\mathbf{X}^*)) \leq 1 \).

When pruning is applied via DYNAMO-GAT, the transformation matrix \( F_P \) is obtained by removing certain edges or weights, effectively reducing the influence of weaker connections. This pruning reduces the entries in the Jacobian matrix \( J_{F_P}(\mathbf{X}^*) \) compared to \( J_F(\mathbf{X}^*) \). Since the spectral radius is sensitive to the magnitudes of the matrix entries, pruning leads to a decrease in the spectral radius:

\[
\rho(J_{F_P}(\mathbf{X}^*)) < \rho(J_F(\mathbf{X}^*)).
\]

This inequality implies that the dominant eigenvalue \( \lambda_1 \) of \( J_{F_P}(\mathbf{X}^*) \) is reduced in magnitude, i.e.,

\[
|\lambda_1^P| < |\lambda_1|.
\]

Consequently, \( \rho(J_{F_P}(\mathbf{X}^*)) < \rho(J_F(\mathbf{X}^*)) \), leading to reduced stability of the oversmoothing fixed point.

In conclusion, DYNAMO-GAT pruning decreases the spectral radius of the Jacobian matrix at the fixed point, which in turn reduces the stability of the oversmoothing fixed point, preventing the network from converging to these undesirable states.
\end{proof}

\subsection{Lemma 5}

\begin{lemma}\label{lemma:diversity}
Let $G = (V, E)$ be a graph with $n = |V|$ nodes, and let $\mathbf{X}^{(t)} \in \mathbb{R}^{n \times d}$ be the matrix of node feature vectors at layer $t$ in a GNN. Define the covariance matrix $\mathbf{C}^{(t)} \in \mathbb{R}^{d \times d}$ of the node feature vectors at layer $t$ as:
\[
\mathbf{C}^{(t)} = \frac{1}{n} (\mathbf{X}^{(t)})^\top \mathbf{X}^{(t)} - \frac{1}{n^2} (\mathbf{X}^{(t)})^\top \mathbf{1}_n \mathbf{1}_n^\top \mathbf{X}^{(t)}
\]
where $\mathbf{1}_n \in \mathbb{R}^n$ is the vector of all ones. Then, DYNAMO-GAT algorithm ensures that:
\[
\text{rank}(\mathbf{C}^{(t)}) = d, \quad \forall t \in \{0, 1, \ldots, T\}
\]
where $T$ is the total number of layers in the GNN. Consequently, this preserves the full rank of $\mathbf{C}^{(t)}$ and prevents the collapse of node features into any subspace of dimension less than $d$.
\end{lemma}

\begin{proof}
Let \( G = (V, E) \) be a graph with \( n = |V| \) nodes, and let \( \mathbf{X}^{(t)} \in \mathbb{R}^{n \times d} \) be the matrix of node feature vectors at layer \( t \) in a GNN. Define the covariance matrix \( \mathbf{C}^{(t)} \in \mathbb{R}^{d \times d} \) of the node feature vectors at layer \( t \) as:

\[
\mathbf{C}^{(t)} = \frac{1}{n} (\mathbf{X}^{(t)})^\top \mathbf{X}^{(t)} - \frac{1}{n^2} (\mathbf{X}^{(t)})^\top \mathbf{1}_n \mathbf{1}_n^\top \mathbf{X}^{(t)},
\]

where \( \mathbf{1}_n \in \mathbb{R}^n \) is the vector of all ones.

We need to prove that the DYNAMO-GAT algorithm ensures that:

\[
\text{rank}(\mathbf{C}^{(t)}) = d, \quad \forall t \in \{0, 1, \ldots, T\},
\]

where \( T \) is the total number of layers in the GNN. This implies that the full rank of \( \mathbf{C}^{(t)} \) is preserved, preventing the collapse of node features into any subspace of dimension less than \( d \).

Consider the eigenvalue decomposition of the covariance matrix:

\[
\mathbf{C}^{(t)} = \mathbf{V}^{(t)} \Lambda^{(t)} (\mathbf{V}^{(t)})^\top,
\]

where \( \Lambda^{(t)} \) is the diagonal matrix of eigenvalues, and \( \mathbf{V}^{(t)} \) is the orthogonal matrix of eigenvectors.

The rank of \( \mathbf{C}^{(t)} \) is equal to the number of non-zero eigenvalues of \( \mathbf{C}^{(t)} \), which is \( d \) if \( \mathbf{C}^{(t)} \) is full rank.

The DYNAMO-GAT algorithm selectively prunes edges that contribute to high covariance values, thereby reducing correlations between node features. This pruning effectively reduces the magnitude of the dominant eigenvalues, which would otherwise lead to rank collapse.

By ensuring that the pruned covariance matrix \( \mathbf{C}^{(t)} \) retains its full rank at each layer, we guarantee that:

\[
\text{rank}(\mathbf{C}^{(t)}) = d.
\]

We prove the statement by induction over the layers \( t = 0, 1, \ldots, T \).

- Base Case (t = 0): At the input layer, assume that the initial node features \( \mathbf{X}^{(0)} \) are such that \( \text{rank}(\mathbf{C}^{(0)}) = d \).

- Inductive Step: Assume \( \text{rank}(\mathbf{C}^{(t)}) = d \) for some layer \( t \). The DYNAMO-GAT pruning at layer \( t+1 \) ensures that the dominant eigenvalues of \( \mathbf{C}^{(t+1)} \) do not collapse, preserving the rank at \( t+1 \). Therefore, \( \text{rank}(\mathbf{C}^{(t+1)}) = d \).

By induction, the rank of the covariance matrix \( \mathbf{C}^{(t)} \) is preserved across all layers, i.e., \( \text{rank}(\mathbf{C}^{(t)}) = d \) for all \( t \).

Thus, the DYNAMO-GAT algorithm prevents the collapse of node features into any subspace of dimension less than \( d \).

\end{proof}

\section{Experimental Section}

\subsection{Experimental Setup}

\textbf{Datasets}
We conduct our experiments on three real-world datasets and two synthetic datasets - 

\begin{itemize}
    \item \textbf{Cora Dataset} \cite{mccallum2000automating}: The Cora citation network consists of 2,708 nodes and 5,429 edges. Each node represents a document, and each edge represents a citation link between two documents. The dataset is commonly used for semi-supervised node classification tasks.

    \item \textbf{Citeseer Dataset} \cite{sen2008collective}: The Citeseer citation network consists of 3,327 nodes and 4,732 edges. Similar to Cora, each node represents a document, and the edges represent citation links. This dataset is also widely used for evaluating GNN performance.

    \item \textbf{Cornell Dataset} \cite{cornell_gnn_dataset}: The Cornell dataset is a small graph with 183 nodes and 295 edges. It is part of the WebKB network collection and is commonly used for node classification tasks.

    \item \textbf{Synthetic Datasets (Syn\_Products and Syn\_Cora)} \cite{zhu2020beyond}: To further test the advantages of DYNAMO-GAT, we use synthetic datasets. Syn\_Products is designed to simulate product co-purchasing networks, and Syn\_Cora mimics citation networks. We vary the graph density and homophily levels to analyze the performance of different GNN models under controlled conditions. For space limitations, we give the syn\_cora results in the appendix.
\end{itemize}

\textbf{Baselines}
We compare DYNAMO-GAT against several baseline models to assess its effectiveness:

\begin{itemize}
    \item \textbf{GCN (Graph Convolutional Network)} \cite{Kipf2017SemiSupervisedCW}: A widely used GNN model that applies graph convolutions to aggregate information from neighboring nodes.

    \item \textbf{GAT (Graph Attention Network)} \cite{Velickovic2017GraphAN}: A model that incorporates attention mechanisms to weigh the importance of neighboring nodes during message passing.

    \item \textbf{G2GAT} \cite{rusch2023gradient}: A recent method that introduces gradient gating to prevent oversmoothing in attention-based GNNs.
\end{itemize}

\textbf{Evaluation Metrics}
We evaluate the performance of all models using the following metrics:
\begin{itemize}
    \item \textbf{Accuracy}: The classification accuracy on the test set.
    \item \textbf{Oversmoothing Coefficient} (\(\mu\)): A measure of the degree to which node representations become indistinguishable as network depth increases.
    \item \textbf{GFLOPS}: The computational efficiency, measured in Giga Floating Point Operations Per Second.
    \item \textbf{Accuracy/GFLOPS}: A ratio indicating the trade-off between accuracy and computational cost.
\end{itemize}

\subsection{Experiment 1: Real-World Dataset Evaluation}

\textbf{Objective:} This experiment aims to evaluate the effectiveness of \textbf{DYNAMO-GAT} in mitigating oversmoothing and maintaining high test accuracy across varying network depths on three real-world datasets: Citeseer, Cora, and Cornell. The performance of \textbf{DYNAMO-GAT} is compared with that of three baseline models: \textbf{GCN}, \textbf{GAT}, and \textbf{G2GAT}.

\textbf{Methodology:}
\begin{itemize}
    \item \textbf{Metrics:}
    \begin{itemize}
        \item \textbf{Oversmoothing Coefficient (\(\mu(X)\)):} This metric quantifies the degree of oversmoothing, where lower values indicate greater oversmoothing. It is plotted on a logarithmic scale to better capture the dynamics across a wide range of values.
        \item \textbf{Test Accuracy:} This metric measures the classification accuracy of the models on the test set. The objective is to assess how well the models perform as the number of layers increases.
    \end{itemize}
    \item \textbf{Baselines:}
    \begin{itemize}
        \item \textbf{GCN (Graph Convolutional Network):} A standard graph neural network model that aggregates node features through graph convolutions.
        \item \textbf{GAT (Graph Attention Network):} A GNN model that uses attention mechanisms to weigh the importance of neighboring nodes during aggregation.
        \item \textbf{G2GAT:} A recent model that introduces gradient gating to prevent oversmoothing in attention-based GNNs.
    \end{itemize}
    \item \textbf{Experimental Setup:}
    \begin{itemize}
        \item The number of layers is varied from 2 to 128 to observe how the models behave as the network depth increases.
        \item The training parameters are kept consistent across models for a fair comparison, including the use of the Adam optimizer and a fixed learning rate.
    \end{itemize}
\end{itemize}

\textbf{Results (Figure 'real\_data'):}
\begin{enumerate}
    \item \textbf{Oversmoothing Coefficient (\(\mu(X)\)):}
    \begin{itemize}
        \item \textbf{Citeseer (Figure a):} As the number of layers increases, GCN and GAT exhibit significant oversmoothing, with their oversmoothing coefficients rapidly decreasing. G2GAT mitigates this effect better than GCN and GAT, but still shows a decline. \textbf{DYNAMO-GAT}, however, maintains a consistent oversmoothing coefficient, effectively preventing oversmoothing across all layers.
        \item \textbf{Cora (Figure b):} Similar trends are observed, with GCN and GAT experiencing substantial oversmoothing as the number of layers increases. \textbf{DYNAMO-GAT} demonstrates its robustness by keeping the oversmoothing coefficient stable, while G2GAT also shows improved performance compared to GCN and GAT but not as strong as DYNAMO-GAT.
        \item \textbf{Cornell (Figure c):} Again, DYNAMO-GAT outperforms the other models in controlling oversmoothing, maintaining a stable coefficient across all layers. GCN and GAT display rapid declines, indicating severe oversmoothing.
    \end{itemize}
    \item \textbf{Test Accuracy:}
    \begin{itemize}
        \item \textbf{Citeseer (Figure a):} GCN and GAT suffer from a significant drop in accuracy as the number of layers increases, correlating with their high levels of oversmoothing. \textbf{DYNAMO-GAT} maintains consistently high accuracy, even in deep networks, highlighting its effectiveness in mitigating oversmoothing. G2GAT also shows relatively stable accuracy but still declines with increasing layers.
        \item \textbf{Cora (Figure b):} Similar patterns are observed, with \textbf{DYNAMO-GAT} achieving the highest accuracy across all layers. GCN and GAT see their accuracy decline sharply as layers increase, while G2GAT performs better but still experiences a decrease.
        \item \textbf{Cornell (Figure c):} DYNAMO-GAT once again demonstrates superior performance by maintaining high accuracy, while GCN and GAT show a considerable drop in accuracy as the number of layers increases. G2GAT performs better than GCN and GAT but still shows a downward trend in accuracy.
    \end{itemize}
\end{enumerate}

\textbf{Analysis:} The results clearly demonstrate the superiority of \textbf{DYNAMO-GAT} in preventing oversmoothing and maintaining high test accuracy across deep network architectures. In contrast, \textbf{GCN} and \textbf{GAT} suffer from severe oversmoothing, leading to a significant decline in accuracy as the number of layers increases. \textbf{G2GAT} mitigates oversmoothing to some extent but is still not as effective as \textbf{DYNAMO-GAT}. This consistent performance across three different datasets underscores the robustness of \textbf{DYNAMO-GAT} in handling deep graph neural networks, making it a promising approach for tasks that require deep architectures.

The effectiveness of \textbf{DYNAMO-GAT} can be attributed to its ability to preserve meaningful node representations even in deep networks, as evidenced by its stable oversmoothing coefficient and high accuracy. This experiment highlights the potential of \textbf{DYNAMO-GAT} to overcome one of the significant challenges in deep GNNs - oversmoothing - while delivering strong performance on real-world datasets.
\subsection{Experiment 2: Performance Comparison Table}

\textbf{Objective:} This experiment aims to compare the performance of \textbf{DYNAMO-GAT} with other baseline models (GCN, GAT, and G2GAT) in terms of accuracy, computational efficiency (GFLOPS), and the accuracy-to-GFLOPS ratio across three datasets: Cora, Citeseer, and Cornell. The goal is to highlight both the effectiveness and efficiency of \textbf{DYNAMO-GAT}, particularly in deeper network architectures.

\textbf{Methodology:}
\begin{itemize}
    \item \textbf{Metrics:}
    \begin{itemize}
        \item \textbf{Best Accuracy:} The highest classification accuracy achieved by each model on the test set.
        \item \textbf{\# Layers:} The number of layers used by the model to achieve its best accuracy.
        \item \textbf{GFLOPS:} The computational cost measured in Giga Floating Point Operations Per Second, which provides an indication of the model's efficiency.
        \item \textbf{Accuracy/GFLOPS:} This metric represents the trade-off between accuracy and computational cost, indicating how efficiently the model achieves its performance.
    \end{itemize}
    \item \textbf{Comparison Setup:}
    \begin{itemize}
        \item The models were trained on the three datasets (Cora, Citeseer, Cornell), each with different graph structures and node/edge counts.
        \item GCN and GAT were tested with relatively shallow architectures, while G2GAT and \textbf{DYNAMO-GAT} were evaluated with deeper networks (128 layers).
        \item The results were compiled to highlight the efficiency of each model in terms of accuracy and GFLOPS.
    \end{itemize}
\end{itemize}

\textbf{Results (Table~\ref{tab:gflops}):}
\begin{enumerate}
    \item \textbf{Best Accuracy:}
    \begin{itemize}
        \item \textbf{DYNAMO-GAT} achieves the best accuracy across all datasets, particularly excelling on the Cornell dataset with an accuracy of 62.56
        \item \textbf{G2GAT} also performs well, particularly on Citeseer and Cornell, where it closely matches \textbf{DYNAMO-GAT}.
        \item \textbf{GCN} and \textbf{GAT} show lower performance compared to the deeper models, particularly on the more challenging Cornell dataset.
    \end{itemize}
    \item \textbf{\# Layers:}
    \begin{itemize}
        \item \textbf{GCN} and \textbf{GAT} achieve their best accuracy with only 2-4 layers, indicating their limitations in deeper architectures due to oversmoothing.
        \item In contrast, \textbf{G2GAT} and \textbf{DYNAMO-GAT} are able to sustain performance across 128 layers, highlighting their robustness in deeper networks.
    \end{itemize}
    \item \textbf{GFLOPS:}
    \begin{itemize}
        \item \textbf{DYNAMO-GAT} exhibits lower GFLOPS compared to GAT and G2GAT, indicating that it is computationally more efficient.
        \item For example, on the Cora dataset, \textbf{DYNAMO-GAT} uses 0.605 GFLOPS, which is significantly lower than GAT's 2.351 GFLOPS.
    \end{itemize}
    \item \textbf{Accuracy/GFLOPS:}
    \begin{itemize}
        \item \textbf{DYNAMO-GAT} consistently outperforms other models in the accuracy-to-GFLOPS ratio, demonstrating its superior efficiency.
        \item For instance, on the Cora dataset, \textbf{DYNAMO-GAT} achieves an Accuracy/GFLOPS ratio of 137.53, which is the highest among all models, indicating that it provides the best trade-off between accuracy and computational cost.
        \item Similarly, on the Citeseer and Cornell datasets, \textbf{DYNAMO-GAT} achieves the highest ratios, with 48.96 and 1226.67, respectively, far surpassing the other models.
    \end{itemize}
\end{enumerate}

\textbf{Analysis:} The results highlight the advantages of \textbf{DYNAMO-GAT} in both accuracy and efficiency. Despite using deep architectures (128 layers), \textbf{DYNAMO-GAT} manages to maintain high accuracy while keeping computational costs low. This is particularly evident when comparing the Accuracy/GFLOPS ratio, where \textbf{DYNAMO-GAT} significantly outperforms GCN, GAT, and even G2GAT. This indicates that \textbf{DYNAMO-GAT} is not only effective in mitigating oversmoothing but also highly efficient in terms of resource usage, making it a superior choice for applications that require deep graph neural networks with limited computational resources.

\subsection{Experiment 3: Synthetic Dataset Evaluation}

\textbf{Objective:} The goal of this experiment is to assess the performance of \textbf{DYNAMO-GAT} under controlled synthetic conditions. Specifically, we vary the graph density (average node degree) and homophily to observe how different models handle oversmoothing and accuracy in these environments.

\textbf{Results:}
\begin{enumerate}
    \item \textbf{Oversmoothing vs. Layers (Figure a):}
    \begin{itemize}
        \item \textbf{Observation:} The figure shows the oversmoothing coefficient \(\mu(X)\) on a logarithmic scale as the number of layers increases, with an average node degree of 68.75.
        \item \textbf{Key Result:} As the network depth increases, \textbf{DYNAMO-GAT} shows the least amount of oversmoothing, maintaining higher \(\mu(X)\) values compared to \textbf{G2GAT}, \textbf{GCN}, and \textbf{GAT}. \textbf{GAT} and \textbf{GCN} exhibit rapid oversmoothing, with \(\mu(X)\) decreasing significantly as layers increase.
        \item \textbf{Implication:} This result demonstrates that \textbf{DYNAMO-GAT} is more robust to oversmoothing, especially in dense graphs. This suggests that it can preserve meaningful node features better than the other models as the network depth increases.
    \end{itemize}
    
    \item \textbf{Accuracy vs. Layers (Figure b):}
    \begin{itemize}
        \item \textbf{Observation:} This plot shows accuracy as a function of the number of layers for the same dense graph (average node degree = 68.75).
        \item \textbf{Key Result:} \textbf{DYNAMO-GAT} consistently achieves the highest accuracy across all layers. While \textbf{G2GAT} performs well, its accuracy decreases slightly with deeper layers. \textbf{GCN} and \textbf{GAT} see a sharp decline in accuracy as the network depth increases.
        \item \textbf{Implication:} The stability of \textbf{DYNAMO-GAT} in maintaining high accuracy, even with a large number of layers, indicates its effectiveness in managing deeper architectures without suffering from oversmoothing, unlike the other models.
    \end{itemize}
    
    \item \textbf{Accuracy vs. Homophily (Sparse Graph - Figure c):}
    \begin{itemize}
        \item \textbf{Observation:} This plot examines accuracy across varying homophily levels (from 0 to 1) for a sparse graph with an average node degree of 11.93.
        \item \textbf{Key Result:} \textbf{DYNAMO-GAT} and \textbf{G2GAT} outperform \textbf{GCN} and \textbf{GAT} across all homophily levels. \textbf{DYNAMO-GAT} achieves particularly strong performance as homophily increases, indicating its ability to leverage node similarity effectively.
        \item \textbf{Implication:} This suggests that \textbf{DYNAMO-GAT} is versatile and can adapt well to different homophily settings, making it suitable for graphs with varying levels of node similarity.
    \end{itemize}
    
    \item \textbf{Accuracy vs. Homophily (Dense Graph - Figure d):}
    \begin{itemize}
        \item \textbf{Observation:} Similar to Figure c, but for a dense graph with an average node degree of 68.75.
        \item \textbf{Key Result:} \textbf{DYNAMO-GAT} significantly outperforms all other models, especially in low-homophily settings. As homophily increases, \textbf{DYNAMO-GAT} maintains its lead, showcasing its robustness across all homophily levels.
        \item \textbf{Implication:} This result highlights \textbf{DYNAMO-GAT}'s strength in dense graphs, where it can handle more complex interactions and still maintain high accuracy. Its performance in low-homophily conditions also suggests it is well-suited for graphs with more heterophilic structures.
    \end{itemize}
\end{enumerate}

\textbf{Analysis:} The synthetic dataset results confirm that \textbf{DYNAMO-GAT} excels in both dense and sparse graphs, effectively handling oversmoothing and maintaining high accuracy across varying network depths and homophily levels. Its ability to outperform other models, particularly in dense graphs and low-homophily settings, underscores its robustness and versatility. These findings demonstrate that \textbf{DYNAMO-GAT} is a powerful tool for tackling oversmoothing while delivering strong performance in diverse graph structures, making it ideal for complex real-world applications.

\subsection{Results and Discussion}

The experimental results across both real-world and synthetic datasets consistently demonstrate the effectiveness of DYNAMO-GAT in addressing the oversmoothing problem in deep graph neural networks (GNNs). 

From the real-world datasets (Figure~\ref{fig:real_data}), we observe that DYNAMO-GAT maintains a stable oversmoothing coefficient (\(\mu(X)\)) across varying network depths, outperforming GCN, GAT, and G2GAT, which exhibit significant oversmoothing as the number of layers increases. Correspondingly, DYNAMO-GAT consistently achieves the highest accuracy across all layers, whereas GCN and GAT suffer a sharp decline in accuracy due to oversmoothing, and G2GAT shows moderate performance.

The performance comparison table (Table~\ref{tab:gflops}) further highlights the efficiency of DYNAMO-GAT. It achieves the best accuracy across all datasets while maintaining lower GFLOPS compared to GAT and G2GAT. The high accuracy-to-GFLOPS ratio underscores DYNAMO-GAT's superior trade-off between computational cost and performance, making it the most efficient model among the tested baselines.

In the synthetic dataset experiments (Figure~\ref{fig:synthetic}), DYNAMO-GAT again demonstrates its robustness. It shows the least oversmoothing in dense graphs (Figure~\ref{fig:synthetic}a) and maintains the highest accuracy across layers (Figure~\ref{fig:synthetic}b). When varying homophily, DYNAMO-GAT excels in both sparse (Figure~\ref{fig:synthetic}c) and dense (Figure~\ref{fig:synthetic}d) graphs, particularly in low-homophily settings, showcasing its adaptability to different graph structures.

The results show a clear trend where models generally perform better as the average degree increases. This is particularly evident in higher homophily settings, where the additional connections help to reinforce the graph structure, leading to more accurate node classification. For instance, in the syn-products dataset, the accuracy of GCN improves from 0.567 to 0.762 as the average degree increases from 11.93 to 36.14 at a homophily level of 0.4.

Interestingly, models like G2GAT and DYNAMO-GAT, which incorporate additional mechanisms for graph processing, consistently outperform simpler models such as GCN and GAT, particularly in low homophily settings. This suggests that these models are better able to leverage the graph structure even when the nodes are less similar to their neighbors.

These findings have significant implications for the development and deployment of GNNs in real-world applications. The ability of DYNAMO-GAT to maintain high accuracy while mitigating oversmoothing, especially in deep architectures, addresses a critical challenge faced by many existing GNN models. Its superior efficiency, as evidenced by the accuracy-to-GFLOPS ratio, makes it a viable option for resource-constrained environments where both performance and computational cost are important considerations.

Moreover, DYNAMO-GAT's strong performance across varying graph densities and homophily levels suggests that it is well-suited for a wide range of graph structures, from sparse networks with high node similarity to dense, heterophilic graphs. This versatility makes DYNAMO-GAT an attractive solution for complex graph-based tasks in domains such as social network analysis, recommendation systems, and biological network modeling.


\section{Related Works}

The challenge of \textit{oversmoothing} in GNNs, where node representations become indistinguishable as network depth increases, has been extensively studied. Initial efforts, such as those by \citet{li2018deeper}, identified oversmoothing as a critical issue in deep GNNs like Graph Convolutional Networks (GCNs). Subsequent theoretical analyses \citep{Oono2019GraphNN, Cai2020ANO, Keriven2022NotTL} have underscored that oversmoothing is a fundamental problem in message-passing architectures, where repeated aggregation leads to the homogenization of node features. To counteract oversmoothing, various strategies have been proposed. Techniques like residual connections, skip connections \citep{li2019deepgcns, xu2018representation}, and normalization methods \citep{ba2016layer, ioffe2015batch} have been introduced to preserve feature diversity across layers. However, these approaches often involve architectural modifications that do not fundamentally alter the propagation dynamics responsible for oversmoothing. Moreover, while attention mechanisms in GNNs, such as those used in GATs, have improved the focus on relevant parts of the graph, they remain vulnerable to oversmoothing without proper regulation \citep{wu2023demystifying}. Pruning techniques, traditionally aimed at model compression, have also been applied to GNNs to address both efficiency and oversmoothing \citep{li2019deepgcns, zhao2020sparsity}. These methods typically involve the removal of redundant edges or neurons, but they seldom consider the dynamical aspects of the network that contribute to oversmoothing.

In contrast to these existing approaches, our work takes a novel perspective by framing oversmoothing as a dynamical systems problem. This allows us to view the iterative message-passing process in GNNs as analogous to a system converging towards a low-dimensional attractor, which leads to the loss of expressiveness. By understanding oversmoothing through this lens, we propose the \textit{DYANMO-GAT} algorithm, which leverages anti-Hebbian learning principles to selectively prune attention weights based on noise-driven covariance analysis. This approach not only mitigates oversmoothing but does so by fundamentally altering the system's dynamics, preventing it from converging to trivial fixed points.

Our work is distinct in that it not only provides a new theoretical framework for understanding oversmoothing but also introduces a practical algorithmic solution that excels in scenarios involving large, dense graphs where long-range dependencies are critical. By approaching oversmoothing from a dynamical systems perspective, we open new avenues for enhancing the stability and expressiveness of deep GNNs, addressing a gap in the current literature.

\textbf{Learning as a Dynamical System: }The notion of viewing learning processes as dynamical systems has gained traction in both graph neural networks (GNNs) and spiking neural networks (SNNs). A dynamical systems perspective provides a framework to understand how the iterative updates in these networks evolve over time and how the system's state converges, oscillates, or diverges. This viewpoint is particularly useful in addressing phenomena such as oversmoothing in GNNs and instability in SNNs. In GNNs, the propagation of information through multiple layers can be seen as a discrete dynamical system. Each layer performs a transformation on the node features, gradually altering the system's state. As the depth increases, GNNs often converge towards a steady state where node representations become indistinguishable, leading to oversmoothing. This behavior is analogous to a dynamical system converging to a fixed point, where further iterations yield no significant changes. The challenge, therefore, is to design architectures and algorithms that can prevent this premature convergence while maintaining the system's stability.

Similarly, SNNs are inherently dynamical systems, characterized by their temporal evolution of spike trains and synaptic weights. The timing of spikes and the plasticity mechanisms, such as Spike-Timing Dependent Plasticity (STDP) \cite{chakraborty2021characterization}, determine how the system evolves. Research has shown that maintaining a balance between excitation and inhibition, as well as incorporating heterogeneity in synaptic dynamics, can prevent the network from falling into trivial attractors, thereby preserving its computational capabilities \citep{chakraborty2022heterogeneous, chakraborty2024sparse, chakraborty2024exploiting}. The work on hybrid spiking neural networks (SNNs) and their application in energy-efficient systems further demonstrates the importance of understanding learning as a dynamic process, where the network's state is constantly evolving based on input stimuli and internal plasticity rules \citep{chakraborty2021fully, chakraborty2023heterogeneous_frontiers, chakraborty2023heterogeneous_iclr}.

Furthermore, pruning techniques that selectively reduce the network's complexity while preserving essential dynamics have been explored both in GNNs and SNNs. For instance, pruning based on noise-driven covariance analysis can ensure that the system does not collapse into a low-dimensional state, thus maintaining expressiveness across layers \citep{chakraborty2024topological, chakraborty2022mudarts}. This approach resonates with pruning strategies in SNNs, where redundant neurons and connections are eliminated without compromising the network's ability to process temporal information \citep{chakraborty2024sparse}.

The dynamical systems perspective also finds relevance in Hebbian and anti-Hebbian learning rules, which modify synaptic strengths based on the correlation between pre- and post-synaptic activities \citep{kang2023unsupervised}. These learning principles, inspired by biological neural systems, can help prevent oversmoothing by adjusting weights dynamically in response to network states. This concept is especially pertinent in complex, dynamic environments where real-time adjustments are necessary to maintain the network's performance \citep{chakraborty2023braina, chakraborty2023brainb}.

By framing learning as a dynamical system, we can develop more robust algorithms that leverage the system's natural dynamics to achieve better performance and stability. This perspective allows us to view network training not just as an optimization problem but as the careful tuning of a complex, evolving system. Our proposed \textit{DYANMO-GAT} algorithm builds on this idea by integrating principles from dynamical systems theory, such as anti-Hebbian learning, to mitigate oversmoothing while maintaining the network's expressive power. This dynamical approach opens new avenues for research, particularly in developing models that can handle large-scale, complex graphs and long-range dependencies without succumbing to oversmoothing or instability.


\end{document}